\documentclass{article}



\usepackage[preprint]{neurips_2020}



\usepackage[utf8]{inputenc} 
\usepackage[T1]{fontenc}    
\usepackage{hyperref}       
\usepackage{url}            
\usepackage{booktabs}       
\usepackage{amsfonts}       
\usepackage{nicefrac}       
\usepackage{microtype}      
\usepackage{amsmath}
\usepackage{cleveref}
\usepackage{graphicx}
\usepackage{multirow}
\usepackage{amssymb}
\usepackage{amsthm}
\usepackage{wrapfig}


\usepackage{amsmath,amsfonts,bm}









\def\eqref#1{equation~\ref{#1}}









\def\1{\bm{1}}








\def\vx{{\bm{x}}}

\def\vz{{\bm{z}}}



\DeclareMathAlphabet{\mathsfit}{\encodingdefault}{\sfdefault}{m}{sl}
\SetMathAlphabet{\mathsfit}{bold}{\encodingdefault}{\sfdefault}{bx}{n}


\def\gD{{\mathcal{D}}}

\def\gH{{\mathcal{H}}}

\def\gN{{\mathcal{N}}}









\newcommand{\djs}{d_{\mathrm{JS}}}
\newcommand{\Rad}{\operatorname{Rad}}

\newcommand{\E}{\mathbb{E}}

\newcommand{\R}{\mathbb{R}}



\Crefname{equation}{Eq.}{Eqns.}
\Crefname{figure}{Fig.}{Figs.}
\Crefname{theorem}{Thm.}{Thms.}
\creflabelformat{equation}{(#2#1#3)}
\newtheorem{lemma}{Lemma}
\newtheorem{theorem}{Theorem}

\newtheorem{definition}{Definition}

\title{Anomaly Detection with Domain Adaptation}

%

\author{%
  Ziyi Yang \\
  Stanford University \\
  \texttt{ziyi.yang@stanford.edu} \\
  \And
  Iman Soltani Bozchalooi \\
  Ford Greenfield Labs \\
  \texttt{isoltani@ford.com} \\
  \And
  Eric Darve \\
  Stanford University \\
  \texttt{darve@stanford.edu} \\
}


\begin{document}

\maketitle

\begin{abstract}
We study the problem of semi-supervised anomaly detection with domain adaptation. Given a set of normal data from a source domain and a \textbf{limited} amount of normal examples from a target domain, the goal is to have a well-performing anomaly detector in the target domain. We propose the Invariant Representation Anomaly Detection (IRAD) to solve this problem where we first learn to extract a domain-invariant representation. The extraction is achieved by an across-domain encoder trained together with source-specific encoders and generators by adversarial learning. An anomaly detector is then trained using the learnt representations. We evaluate IRAD extensively on digits images datasets (MNIST, USPS and SVHN) and object recognition datasets (Office-Home). Experimental results show that IRAD outperforms baseline models by a wide margin across different datasets. We derive a theoretical lower bound for the joint error that explains the performance decay from overtraining and also an upper bound for the generalization error.
\end{abstract}


\section{Introduction}

\vspace{-0.5em}

Also known as novelty detection or outlier detection, anomaly detection (AD) is the process of identifying abnormal items or observations that differ from what is defined as normal. Anomaly detection has been applied in many areas, including cyber security (detection of malicious intrusions), medical diagnosis (identification of pathological patterns), robotics (recognize abnormal objects), etc. Anomaly detection with different settings have been studied, for example, many anomaly detection works aim to solve the semi-supervised learning problem such that only normal data are available for training \citep{ruff2018deep, goad, rcgan}. The anomaly detection models are expected to learn an anomaly score function $A(\cdot)$ such that during testing anomalous data should be assigned higher anomalous scores than the examples labelled as ``normal.''

In practical applications, the normal data distribution can have a shift. For example, in manufacturing, we have sufficient amount of ``normal'' observations of engine type A (source domain) and we want to design an anomaly detection algorithm for a different but similar engine type B (target domain). However, we may have only \textbf{limited} normal observations for the target domain. One option is to re-collect a large-scale normal dataset in the new domain, however, this is often prohibitively costly and time-consuming for many practical applications, e.g., medical healthcare and autonomous driving \citep{ganin2015unsupervised, zhao19a}. Can we design a system that can leverage data from the both domains to learn an efficient anomaly detection model for the target domain? In this paper we attempt to address this important and interesting question. This type of problem is also known as Domain Adaptation (DA), which studies the transfer learning between the source and target domains \citep{dsn, gen2adapt, cycada, zhao19a}. 

Surprisingly, domain-adapted anomaly detection has not drawn as much interest as its classification peer, especially comprehensive studies on semi-supervised anomaly detection in the domain adaptation setting are rare. As an effort to solve the problem, we propose the Invariant Representation Anomaly Detection (IRAD) model. IRAD leverages a shared encoder to extract common features from source and target domain data. The shared encoder is adversarially trained with a source-domain specific encoder and a generator. Such design is required to avoid overfitting the target domain where training data are very limited. Then a simple and off-the-shelf anomaly detection model, Isolation Forest (IF), is trained on the extracted shared representations of source and target domain. At test time, the trained IF assigns the anomalous scores given the extracted features from the test target data.

We evaluate IRAD thoroughly on cross-domain anomaly detection benchmarks. Evaluation datasets are transformed from standard digit datasets (MNIST, USPS and SVHN) and Office-Home domain adaptation datasets. We compare IRAD to baselines including the prevailing anomaly detection models and competitive domain adaptation algorithms. Evaluation results show that IRAD outperforms the baseline models by significant margins. For example, on Office-Home dataset (Product$\rightarrow$Clip Art), IRAD improves upon the best baseline by almost 10\%. In addition, we derive a lower bound on the joint error on both domains for models based on invariant representations, which explains the observation that over-training the invariant feature extractor hurts the generalization to the target domain. We also obtain a generalization upper bound that reveals the sources of generalization error. We conduct ablation studies to confirm the effectiveness of objective functions in IRAD. 


\section{Related Work}

\vspace{-0.5em}

One major class of anomaly detection algorithms is generative models that learn the normal data distribution via generation processes, e.g., autoencoders (AE) and generative adversarial networks (GANs). \citet{adgan} train GANs \citep{gan} on images of healthy retina images to identify disease markers. Regularized Cycle-Consistent GAN \citep{rcgan} introduces a regularization distribution to correctly bias the generation towards normal data. Memory augmented generative models \citep{memae, magan} maintain external memory units that interact with the encoding process to store latent representations of the normal data. An emerging type of anomaly detection methods is self-supervised models \citep{geom, goad}. They first apply different transformations to the normal data and train a classifier to predict the corresponding transformation. The anomaly scores depend on the classifications predictions.

Domain adaptation is to learn from source domain data together with limited information of target domain in order to have a well-performing model on the target domain. One heavily studied direction is the unsupervised image classification. Given labeled source-domain images and unlabeled target-domain images, the goal is to obtain a target-domain classifier. One type of methods learns a transformation from the source to target domain \citep{cycada, lee2018diverse}; some approaches learn invariant representations between the two domains \citep{dsn, tzeng2017adversarial, zhao19a}. There are also works addressing few-shot domain adaptation with various problem settings. \citet{fsada} studies fully-supervised domain adaptation cases where target-domain data are limited and labeled. Few-shot domain translation \citep{NIPS2018_7480, biost} learns a mapping function from source to the target domain where limited target-domain data are given. Since the data setting is similar to IRAD, we include the one of state-of-the-art models BiOST \citep{biost} as a baseline.

Previous works studying the task of cross-domain anomaly detection typically have different problem setups from this paper. For example, most works assume access to labeled (both normal and abnormal) data at least in the source domain: \citet{chen2014transfer} learns a regressor using labeled source data, which can predict the target-domain anomaly distribution from the normal distribution (estimated from target-domain normal data). \citet{tad} learns the conditional data distribution with fully-supervised data in multiple target domains. A few works use only normal data in the source and target domain. \citet{ide2017multi} studies the collective anomaly detection problem, where the target domain is one of the source domains, with a mixture of Gaussian graphical models. \citet{adaflow} investigates the multi-source transfer anomaly detection problem by learning normalizing flows from target domains to the source domain. However, these works assume sufficient normal data in the target domain are available, which can be a demanding condition to meet as mentioned before.


\section{Methodology}

\vspace{-0.25em}

\textbf{Problem Statement} \;We investigate the problem of semi-supervised anomaly detection in the domain adaptation setting. For training, the learning algorithm has access to $n$ data points $\{(\vx^{(i)}_{src}, y_{src}^{(i)})\}_{i = 1}^n \in
(X \times Y)^n
$ sampled i.i.d.\ from the source domain $\gD_S$ and \textbf{limited} target data points $\{(\vx^{(j)}_{tgt}, y_{tgt}^{(j)})\}_{j = 1}^{n_t} \in 
(X \times Y)^{n_t}
$ sampled i.i.d.\ from the target domain $\gD_t$ (where $n_t$ is small and $n_t \ll n$). Let $y = 0$ ($y = 1$) denote normal (abnormal). In semi-supervised anomaly detection, we only have access to normal data, i.e., $y_{src}^{(i)} = 0$ and $y_{tgt}^{(j)} = 0$. The goal is to build an anomaly score function $A(\vx_{tgt}):
X \rightarrow a\in \R$ in the target domain. The test set consists of both normal and abnormal target domain data. An evaluation metric of learnt models is the area under the Receiver Operating Characteristic (ROC) curve, or AUROC, w.r.t.\ the true labels and anomaly scores of test examples. 




\subsection{Invariant Representations Extraction by Adversarial Learning}
Learning the domain-invariant features is a prevailing solution for the domain adaptation problem \citep{dsn, ganin2016domain, zhao19a}. Similar to \citet{dsn} and \citet{ganin2016domain}, IRAD includes a shared encoder $E_{sh}$ to extract common features between the source and target data. To enable an appropriate split of shared and domain-specific components, IRAD also has a private encoder $E_{pv}$ in the source domain to distill source-specific components. To ensure that the learned components actually contain useful information, we also introduce a generator to map from the latent space to data space in the source domain $G_{src}$. The generator $G_{src}$, encoders $E_{sh}$ and $E_{pv}$ are adversarially trained together using a discriminator $D_{src}$ in the source domain. The adversarial loss is given as follows:
\begin{equation}
\begin{split}
\min_{\{E_{sh}, E_{pv}, G_{src}\}}\max_{D_{src}} & V_{src}(D_{src}, G_{src}, E_{pv}, E_{sh}) = \\
&\E_{\vx_{src}}[\log D_{src}(\vx_{src})] + \E_{\vx_{src}}[\log(1 - D_{src}(\vx_{src}')] +\\
& + \E_{\vx_{src}, \vx_{tgt}}[\log(1 - D_{src}(\vx_{tgt}')] + \E_{\vx_{src}}[\log(1 - D_{src}(\vx_{rnd})]
\label{eq:adv}
\end{split}
\end{equation}

where $\vx_{src}' = G(E_{pv}(\vx_{src}) + E_{sh}(\vx_{src}))$ represents the reconstruction of the source data; $\vx_{tgt}' = G(E_{pv}(\vx_{src}) + E_{sh}(\vx_{tgt}))$ denotes the generation using the extracted common information $E_{sh}(\vx_{tgt})$ from the target data and private encodings from the source data; $\vx_{rnd} = G(\vz + E_{sh}(\vx_{src}))$ is generated using a variable $\vz$ sampled from a random distribution (empirically we find $\gN(0, 1)$ works well) together with shared encodings $E_{sh}(\vx_{src})$. The $\vx_{rnd}$ term is designed to avoid the scenario in which the private encoder is (incorrectly) so powerful that all latent information for the source domain is encoded with $E_{pv}$. By taking a random vector as part of the input, the shared encoder is trained adversarially to capture the essential information of the source data such that the generated $\vx_{rnd}$ is close to $\vx_{src}$. We conduct an ablation study about $\vx_{rnd}$ in \cref{sec:abl}. The discriminator $D_{src}$ is trained to distinguish real source data $\vx_{src}$ from $\vx_{src}'$, $\vx_{tgt}'$, and $\vx_{rnd}$. The shared encoder $E_{sh}$, $E_{pv}$, and $G_{src}$ are trained to maximize the error $D_{src}$ makes. At optimality, $\vx_{src}'$, $\vx_{tgt}'$ and $\vx_{rnd}$ should resemble real data $\vx_{src}$ w.r.t.\ $D_{src}$.
 
Besides adversarial training, we also optimize with the following cycle consistent losses:
\begin{align}
\label{eq:cycle}
l_{1} = \|\vx_{src} - \vx_{src}'\|_2 \qquad l_{2} = \|\vx_{src} - \vx_{tgt}'\|_2
\end{align}
The first loss enforces the cycle consistency property in the source data space. The second one ensures that components extracted from the target data $E_{sh}(\vx_{tgt})$ are actually shared features such that they reside in the same subspace as $E_{sh}(\vx_{src})$. The cycle consistency losses are crucial in our experiments with high-dimensional real-world images (e.g., in Home-Office dataset where image sizes are usually larger than $300\times300\times3$). We speculate this is due to the instability in GAN training for high-dimensional data \citep{wgan}. Stronger signals like direct cycle consistency losses should help the optimizations of generators and encoders.


\subsection{Split of Private and Shared Components}

The subspace of shared and private encodings of the source data should be dissimilar since they extract different features of $\vx_{src}$. For instance, in a domain adaptation problem on MNIST (source) and SVHN (target), denoted as MNIST$\rightarrow$SVHN, the shared encodings should learn to extract information relevant to the digit, while the private encodings are expected to contain components about digits style, size, etc. To enforce this characteristic, we introduce an optimization objective to minimize the similarity between the (normalized) shared encodings and private encodings, similar to \citet{dsn}:
\begin{align}
    l_{dis} &= \|E_{sh}(\vx_{src})^T E_{pv}(\vx_{src})\|
\label{eq:dis}
\end{align}
Also, the shared encodings extracted from two domains are expected to be similar, since they should capture the common information between the two domains. Therefore, we minimize the negative of inner product between the (normalized) shared encodings of the source and target data:
\begin{align}
    l_{sim} &= -\|E_{sh}(\vx_{src})^T E_{sh}(\vx_{tgt})\|
\label{eq:sim}
\end{align}
 We show in \Cref{fig:sim} that $l_{sim}$ objective is essential to ensure proximity between the shared encodings extracted from the source and target data. Without $l_{sim}$, we observe that the shared encodings of the source and target data are too far apart which undermines the performance of the anomaly detection algorithm. More details on this ablation study will be given in \cref{sec:abl}.

The final objective function of IRAD is the weighted sum of the losses mentioned above:
\begin{equation}
    V_{src} + \alpha (l_1 + l_2) + \beta (l_{dis} + l_{sim}) 
\end{equation}
Empirically, we find $\alpha = 1$, $\beta = 0.5$ works well and we use these values in all experiments.



\subsection{Anomaly Detection}
After the shared encoder is trained, we can conveniently leverage an off-the-shelf anomaly detection algorithm $A'$, Isolation Forest (IF, \citet{liu2008isolation}), to train an anomaly detection model using the shared representations extracted from both source and target data in the training set. The description of IF can be found in the next section. In general, any semi-supervised anomaly detection models can be used here. We choose IF because it is a prevailing and effective method with the standard implementation available \citep{scikit-learn}; also empirically we find IF works well. We conduct detailed comparisons between IRAD and vanilla IF in the experiments. For testing, given a test example $\vx$, we encode the test example $\vx$ to the shared subspace between source and target space $E_{sh}(\vx)$. The anomaly score $A(\vx)$ is then given as $A'(E_{sh}(\vx))$.

\section{Experimental and Theoretical Results}
\vspace{-0.5em}
We evaluate IRAD extensively on various kinds of domain adaptation benchmarks. The datasets include the MNIST digit images (source domain), SVHN and USPS. We also evaluate on the Clip Art and Product Images domains from the object recognition Office-Home Dataset \citep{office-home} with more realistic images of much larger sizes. Besides, we obtain the theoretical bounds for joint error and generalization error of IRAD, which are consistent with experimental observations. 

\subsection{Digits Anomaly Detection}

We test with two scenarios: the adaptation from MNIST (source) to USPS (target) and MNIST (source) to SVHN (target). An example of datasets setup is as follows. Assume digit 0 is the normal class. In the training phase, digit 0 from source domain (e.g., MNIST) as well as a limited number of digits 0 from the target domain (e.g., USPS) are available. Test data contain all the categories of digits in the target domain where digits 0 are labelled as ``normal'' and other digits are labelled as ``abnormal''. We use the original train/test split in the target dataset. In the digit anomaly detection experiments, the number of target training data $n_t$ 50. In the discussion section, we experiment with different numbers of target training data available. We compare IRAD with the following baselines:

\textbf{Isolation Forest (IF)} is a tree ensemble method that ``isolates'' data by randomly selecting a feature and then randomly selecting a split value between the maximum and minimum values of the selected feature to construct trees \citep{liu2008isolation}. The averaged path length from the root node to the example is a measure of normality. We experiment with two types of isolation forest: IF (T) trained with only target data; IF (S+T) trained with both source and target data.

\textbf{One Class Support Vector Machines (OCSVM)} is a classical anomaly detection algorithm similar to the regular SVM. OCSVM is a kernel-based method that learns a decision function for novelty detection \cite{ocsvm}. It classifies new data as similar or different to the normal data. Similar to IF, we test with two variants of OCSVM: OCSVM (T) and OCSVM (S+T).

\textbf{Bidirectional One-Shot Unsupervised Domain Mapping (BiOST)} is a recent work on few-shot domain transformation \citep{biost}. BiOST learns a encoder-generator pair for each domain respectively. Networks are then trained with across domains cyclic mapping losses and a KL divergence in the latent space similar to the one in Variational Autoencoder (VAE). The anomaly score of a target data example is its reconstruction error. BiOST is a representative baseline of methods that leverage cross-domain transformation \citep{cycada, adaflow}. 

\textbf{Deep Support Vector Data Description (DSVDD)} is a competitive deep learning one-class classification model for anomaly detection \citep{ruff2018deep}. DSVDD projects data to a sphere in the latent space by learning the feature encoder and the data center of the sphere. We train DSVDD on the union of source and target domain data.

We also test with an intuitive approach by augmenting the target domain training data, denoted as ``AGT''. The data are augmented by image rotations and flipping. An IF is then trained on the augmented data as the anomaly detector.

\begin{table}[htbp]
\caption{Anomaly detection with domain adaptation on MNIST (source domain) and USPS (target domain). The evaluation metric is AUROC in percent. The highest numbers are in bold.}
\resizebox{\columnwidth}{!}{\begin{tabular}{c|cccccccc}
\toprule
Model & DSVDD & OCSVM (S+T) & OCSVM (T) & IF (S+T) & IF (T) & AGT & BiOST & \textbf{IRAD} \\
\midrule
0 & 36.5$\pm$10.1 & 74.5$\pm$0.6 & 4.3$\pm$1.1 & 22.9$\pm$1.4 & 95.1$\pm$0.4 & 94.3$\pm$1.7 & 75.9$\pm$4.2 & \textbf{96.0$\pm$0.6}\\
1 & 74.3$\pm$7.1 & 4.0$\pm$0.2 & 1.1$\pm$0.1 & 97.3$\pm$0.6 & 98.7$\pm$0.1 & \textbf{99.0$\pm$0.1} & 97.6$\pm$1.8 & \textbf{99.0$\pm$0.2} \\
2 & 57.2$\pm$4.7 & 45.1$\pm$0.8 & 18.5$\pm$2.4 & 48.6$\pm$2.8 & 74.0$\pm$1.8 & 45.9$\pm$3.3 & 72.8$\pm$9.6 & \textbf{82.2$\pm$0.7}  \\
3 & 59.5$\pm$9.0 & 61.2$\pm$1.1 & 7.2$\pm$1.3 & 39.3$\pm$1.8 & 84.5$\pm$2.2 & 56.1$\pm$3.2 & 69.7$\pm$11 & \textbf{88.7$\pm$2.6}\\
4 & 68.3$\pm$9.0 & 20.6$\pm$0.6 & 10.8$\pm$1.4 & 71.5$\pm$0.4 & 81.3$\pm$1.4 & 73.2$\pm$1.7 & 79.1$\pm$10 & \textbf{88.3$\pm$1.4} \\
5 & 48.7$\pm$3.4 & 66.0$\pm$0.4 & 18.7$\pm$2.0 & 32.3$\pm$0.9 & 70.0$\pm$1.6 & 41.2$\pm$2.0 & 79.5$\pm$3.6 & \textbf{81.1$\pm$2.7}\\
6 & 65.1$\pm$6.1 & 42.3$\pm$0.7 & 5.6$\pm$2.3 & 59.4$\pm$1.4 & 95.7$\pm$0.8 & 65.3$\pm$3.8 & 90.0$\pm$4.0 & \textbf{96.3$\pm$1.0}\\
7 & 62.7$\pm$5.0 & 37.8$\pm$1.6 & 6.0$\pm$1.0 & 61.5$\pm$1.6 & 91.8$\pm$1.3 & 69.4$\pm$2.3 & 66.8$\pm$6.6 & \textbf{95.6$\pm$1.6} \\
8 & 53.1$\pm$10.6 & 46.4$\pm$0.5 & 10.3$\pm$2.2 & 51.0$\pm$1.4 & 79.1$\pm$1.3 & 68.7$\pm$3.8 & 78.3$\pm$9.5 & \textbf{83.7$\pm$2.3} \\
9 & 62.7$\pm$4.4 & 28.1$\pm$0.7 & 6.0$\pm$0.9 & 69.6$\pm$1.6 & 93.1$\pm$0.8 & 76.4$\pm$2.3 & 84.7$\pm$11 & \textbf{94.9$\pm$0.5}
\end{tabular}}
\label{tab:m-u}
\vspace{-1em}
\end{table}

\begin{table}[htbp]
\caption{Results on MNIST$\rightarrow$SVHN with highest numbers in bold. The metric is AUROC in percent.}
\centering
\resizebox{\columnwidth}{!}{\begin{tabular}{c|cccccccc}
\toprule
Model & DSVDD & OCSVM (S+T) & OCSVM (T) & IF (S+T) & IF (T) & AGT & BiOST & \textbf{IRAD} \\
\midrule
0 & 51.3$\pm$1.3 & 50.4$\pm$0.1 & 47.1$\pm$0.2 & 49.0$\pm$0.5 & 54.4$\pm$0.4 & 53.2$\pm$0.5 & 56.1$\pm$1.7 & \textbf{56.7$\pm$2.0}\\
1 & 51.7$\pm$1.2 & 49.5$\pm$0.1 & 51.0$\pm$0.2 & 50.4$\pm$0.4 & 51.8$\pm$0.6 & 50.7$\pm$0.5 & 56.7$\pm$1.8 & \textbf{61.0$\pm$2.0} \\
2 & 51.0$\pm$0.8 & 49.0$\pm$0.1 & 49.6$\pm$0.1 & 50.9$\pm$0.4 & 51.2$\pm$0.2 & 50.6$\pm$0.3 & 53.8$\pm$1.0 & \textbf{56.0$\pm$0.2}  \\
3 & 51.7$\pm$0.3 & 48.8$\pm$0.1 & 49.7$\pm$0.3 & 51.0$\pm$0.1 & 50.6$\pm$0.3 & 50.0$\pm$0.4 & 53.7$\pm$1.1 & \textbf{55.8$\pm$0.9}\\
4 & 50.4$\pm$0.7 & 49.7$\pm$0.1 & 51.3$\pm$0.3 & 49.7$\pm$0.5 & 51.7$\pm$0.5 & 50.9$\pm$0.5 & 54.7$\pm$1.5 & \textbf{55.9$\pm$1.1} \\
5 & 50.5$\pm$0.4 & 49.8$\pm$0.1 & 49.5$\pm$0.2 & 42.9$\pm$0.2 & 51.3$\pm$0.4 & 51.0$\pm$0.2 & 53.9$\pm$1.1 & \textbf{54.1$\pm$0.8}\\
6 & 49.2$\pm$0.5 & 50.8$\pm$0.1 & 49.8$\pm$0.2 & 48.8$\pm$0.3 & 52.4$\pm$0.5 & 51.3$\pm$0.4 & 55.9$\pm$1.4 & \textbf{56.6$\pm$1.4}\\
7 & 50.4$\pm$1.1 & 50.1$\pm$0.2 & 51.3$\pm$0.2 & 50.1$\pm$0.5 & 51.9$\pm$0.4 & 49.5$\pm$0.6 & 56.3$\pm$2.0 & \textbf{57.0$\pm$1.3} \\
8 & 50.5$\pm$2.3 & 50.5$\pm$0.2 & 50.0$\pm$0.3 & 49.1$\pm$0.3 & 51.3$\pm$0.2 & 50.7$\pm$0.3 & 53.0$\pm$0.8 & \textbf{54.2$\pm$0.9} \\
9 & 49.8$\pm$0.4 & 50.7$\pm$0.2 & 48.8$\pm$0.3 & 49.2$\pm$0.3 & 52.3$\pm$0.4 & 51.7$\pm$0.5 & 54.4$\pm$1.1 & \textbf{55.9$\pm$1.4}
\end{tabular}}
\label{tab:m-s}
\vspace{-0.5em}
\end{table}

Results for MNIST$\rightarrow$USPS and MNIST$\rightarrow$SVHN are presented in \Cref{tab:m-u} and \Cref{tab:m-s} respectively (averaged over 10 runs). IRAD outperforms all baseline models in both domain adaptation settings. An interesting observation is that Isolation Forest trained only on target data (IF(T)) makes a strong baseline and is even better than both transformation-based deep-learning BiOST (with USPS as the domain) and IF trained on both source and target data. We speculate that this is because USPS anomaly detection itself is not an over-challenging task and IF model is highly data efficient such that training with limited target data can produce strong performance. One explanation for the downgraded performance of IF (S+T) (compared with IF(T)) is that the MNIST and USPS digits are from close but still distinct distributions. MNIST data actually add noises to the training of IF and undermine the performance. We provide a theoretical explanation for this observation in \Cref{sec:thm}.

Images are preprocessed into gray scale single-channel images of size $32\times 32$ so that they can be input to the same network. The shared encoder, private encoder and discriminator in IRAD follow the configurations in standard DCGANs \citep{dcgan}. To ensure a fair comparison, we use the same neural network architectures in IRAD, BiOST and DSVDD (the feature extractor). Hyperparameters are chosen by cross-validation, e.g., the size of latent representation output by encoder is 64.

\subsection{Objects Recognition Anomaly Detection}
The Office-Home objects recognition dataset \citep{office-home} is a prevailing and challenging domain adaptation benchmark. The images are high-dimensional where the average side length is more than 300. We experiment with the Clip Art and Product domains. We evaluate on ten categories that have reasonably sufficient data for evaluation in both domains, as listed in the first column in \Cref{tab:p-c}. Object examples are shown in the appendix. We test on two experimental scenarios: Product$\rightarrow$Clip Art and Clip Art $\rightarrow$Product. Since the number of images in a domain is limited, we augment the training data in the source domain by rotations and flipping, which increases the size of training source data by eight times. For a fair comparison, baseline models are also trained with the augmented datasets. The number of target domain images in the training set $n_t = 10$.

The encoders $E_{sh}$ and $E_{pv}$ in IRAD are ResNet-50 \citep{resnet} pretrained on ImageNet where the last layer is removed and a fully connected layer is added. The decoder is a ten-layer transpose convolution neural networks. The discriminator $D_{src}$ is a ResNet-18 network (without pretraining) followed by a final layer for classification. To improve the optimization process, we use the adversarial objective as in least-square GANs \citep{lsgan}. To have a fair comparison, baseline models with encoding networks, e.g., DSVDD and BiOST, also leverage the pretrained ResNet-50 as the encoders. The size of latent representation output by encoder is 128 chosen by cross-validation.

Experimental results are shown in \Cref{tab:p-c} and \Cref{tab:c-p}, averaged on 10 runs. In each row, we regard the corresponding objects category as the normal class. IRAD shows strong performance in both adaptation scenarios and outperforms all baseline models in 18 out of 20 experiments. We will show later that cycle-consistency losses are crucial in the high-dimensional Home-Office dataset. We speculate that due to the increased complexity in images and the generation process, the transformation-based BiOST is not as good as in digits benchmarks.

\begin{table}[htbp]
\caption{Experimental results of Product$\rightarrow$Clip Art. The evaluation metric is AUROC in percent.}
\resizebox{\columnwidth}{!}{\begin{tabular}{c|cccccccc}
\toprule
Model & DSVDD & OCSVM (S+T) & OCSVM (T) & IF (S+T) & IF (T) & AGT & BiOST & \textbf{IRAD} \\
\midrule
Bike       & 51.1$\pm$2.7 & 50.0$\pm$0.1 & 50.0$\pm$0.1 & 45.5$\pm$1.8 & 57.7$\pm$3.9          & 55.7$\pm$2.6 & 52.7$\pm$0.8 & \textbf{85.7$\pm$2.8}\\
Calculator & 53.4$\pm$8.5 & 49.4$\pm$0.7 & 50.0$\pm$0.0 & 46.4$\pm$1.5 & \textbf{81.5$\pm$3.9} & 79.7$\pm$4.2 & 65.2$\pm$1.0 & 79.2$\pm$1.8 \\
Drill      & 53.5$\pm$4.8 & 47.1$\pm$1.5 & 48.2$\pm$1.2 & 58.8$\pm$2.8 & 63.6$\pm$6.0          & 54.5$\pm$3.3 & 47.0$\pm$0.5 & \textbf{71.2$\pm$5.5}  \\
Hammer     & 50.3$\pm$1.7 & 49.3$\pm$0.7 & 49.2$\pm$0.5 & 56.8$\pm$1.1 & 61.9$\pm$3.4          & 64.4$\pm$2.3 & 43.7$\pm$0.9 & \textbf{77.0$\pm$6.0}\\
Kettle     & 44.3$\pm$6.5 & 48.7$\pm$0.7 & 47.7$\pm$1.7 & 57.0$\pm$2.1 & 57.7$\pm$3.3          & 56.3$\pm$3.2 & 47.7$\pm$1.5 & \textbf{70.0$\pm$4.9}\\
Knives     & 64.3$\pm$4.3 & 48.7$\pm$0.9 & 49.5$\pm$0.6 & 36.1$\pm$2.7 & 67.8$\pm$5.0          & 68.9$\pm$3.9 & 63.1$\pm$1.5 & \textbf{70.3$\pm$3.5}\\
Pan        & 49.2$\pm$5.8 & 49.9$\pm$0.5 & 50.0$\pm$0.0 & 59.8$\pm$1.3 & 60.0$\pm$5.3          & 56.4$\pm$3.9 & 49.3$\pm$1.5 & \textbf{72.8$\pm$3.7}\\
Paper      & 51.4$\pm$1.9 & 49.0$\pm$0.8 & 48.7$\pm$0.7 & 58.4$\pm$3.1 & 61.1$\pm$5.6          & 63.8$\pm$3.8 & 45.1$\pm$2.6 & \textbf{61.8$\pm$0.8} \\
Scissors   & 49.0$\pm$8.7 & 48.5$\pm$0.6 & 48.5$\pm$1.3 & 59.0$\pm$1.1 & 62.9$\pm$3.0          & 66.5$\pm$3.7 & 38.6$\pm$0.8 & \textbf{70.0$\pm$3.3} \\
Soda       & 48.8$\pm$5.8 & 49.9$\pm$0.4 & 50.0$\pm$0.1 & 50.9$\pm$1.8 & 56.4$\pm$7.8          & 57.3$\pm$8.7 & 56.9$\pm$0.8 & \textbf{63.2$\pm$4.9}
\end{tabular}}
\label{tab:p-c}
\vspace{-1em}
\end{table}

\begin{table}[htbp]
\caption{Results of Clip Art$\rightarrow$Product with the best numbers in bold. The metric is AUROC\%.}
\resizebox{\columnwidth}{!}{\begin{tabular}{c|cccccccc}
\toprule
Model & DSVDD & OCSVM (S+T) & OCSVM (T) & IF (S+T) & IF (T) & AGT & BiOST & \textbf{IRAD} \\
\midrule
Bike       & 49.4$\pm$11.6 & 46.2$\pm$1.2 & 46.5$\pm$2.2 & 51.4$\pm$2.0 & 65.5$\pm$3.69         & 54.0$\pm$2.5 & 43.0$\pm$0.6 & \textbf{90.3$\pm$2.6}\\
Calculator & 48.6$\pm$6.7  & 50.0$\pm$0.1 & 50.0$\pm$0.1 & 46.3$\pm$3.0 & 57.6$\pm$6.3          & 56.5$\pm$5.2 & 69.0$\pm$0.6 & \textbf{82.2$\pm$1.8} \\
Drill      & 52.8$\pm$9.5  & 50.0$\pm$0.1 & 50.0$\pm$0.1 & 34.4$\pm$1.3 & 64.4$\pm$5.1          & 33.9$\pm$2.1 & 66.4$\pm$0.7 & \textbf{73.0$\pm$5.4}  \\
Hammer     & 44.7$\pm$9.0  & 47.8$\pm$0.6 & 48.7$\pm$0.5 & 81.9$\pm$1.5 & 80.0$\pm$1.1          & 79.4$\pm$1.2 & 50.1$\pm$0.7 & \textbf{84.5$\pm$2.8} \\
Kettle     & 49.1$\pm$11.1 & 50.0$\pm$0.1 & 50.0$\pm$0.1 & 45.4$\pm$1.5 & 55.6$\pm$5.0          & 52.0$\pm$3.1 & 63.0$\pm$1.0 & \textbf{75.8$\pm$8.5}\\
Knives     & 57.2$\pm$1.8  & 48.1$\pm$1.2 & 49.4$\pm$0.8 & 48.7$\pm$1.8 & 36.0$\pm$1.5          & 47.3$\pm$3.3 & 48.8$\pm$2.2 & \textbf{63.9$\pm$2.4}\\
Pan        & 50.2$\pm$7.6  & 50.0$\pm$0.1 & 50.0$\pm$0.0 & 45.0$\pm$2.0 & 60.9$\pm$2.4          & 48.4$\pm$3.6 & 57.7$\pm$1.4 & \textbf{76.0$\pm$4.5}\\
Paper      & 48.0$\pm$9.3  & 50.0$\pm$0.1 & 50.0$\pm$0.1 & 68.0$\pm$2.0 & 70.6$\pm$4.0          & \textbf{74.9$\pm$ 3.2}& 27.4$\pm$4.0 & 67.4$\pm$3.4 \\
Scissors   & 51.3$\pm$10.1 & 49.5$\pm$0.4 & 49.5$\pm$0.7 & 63.0$\pm$0.9 & 59.0$\pm$1.5          & 65.0$\pm$1.2 & 56.4$\pm$0.6 & \textbf{68.9$\pm$4.0} \\
Soda       & 52.9$\pm$12.0 & 48.5$\pm$1.0 & 47.9$\pm$1.2 & 34.1$\pm$2.5 & 51.0$\pm$13           & 48.0$\pm$9.0 & 50.2$\pm$1.2  & \textbf{53.3$\pm$1.8}
\end{tabular}}
\label{tab:c-p}
\vspace{-1em}
\end{table}

\subsection{Bounds for the Joint Error and the Generalization Error}
\label{sec:thm}
Recent theoretical works on classification domain adaptation discover that minimizing the empirical error on the source domain can be detrimental for the model's performance in the target domain \citep{zhao19a}. We observe the same phenomenon in domain-adaptation AD that overtraining IRAD leads to less accurate detection, as shown in \Cref{fig:epochs}. The adaptation performance first grows and gradually decreases after around 5 epochs. We derive an information-theoretic lower bound of the joint error (\Cref{thm:lower}) to explain this phenomenon.


We start with definitions and notations. Let $\gD^{Y_S}$ and $\gD^{Y_T}$ 
denote the marginal label distribution in the source and target domain. The projection from the data space $X$ to the latent invariant representation space $Z$, induced by $E_{sh}$ in the case of IRAD, is denoted as $g$. The hypothesis (labeling) function $h$ is shared between two domains that map invariant representations $Z$ to predictions $\hat{Y}$. For IRAD, the hypothesis $h$ is induced by the IF learned on the invariant representations (IF learns the anomaly function). To ease the proof process, we assume the anomaly scores are transferred to classification probabilities, for example by applying a threshold. 

\begin{wrapfigure}[19]{R}{0.4\textwidth}
\vspace{-2.6em}
\includegraphics[width=0.4\textwidth]{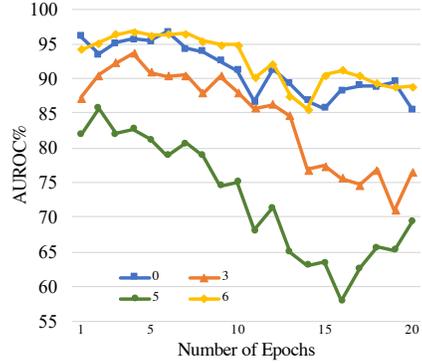}
\caption{Overtraining to minimize the source domain error hurts the performance on the target domain  (experiments conducted on MNIST$\rightarrow$USPS).}
\label{fig:epochs}
\end{wrapfigure}

The above process can be denoted as the Markov chain 
$X \stackrel{g}{\longrightarrow} Z \stackrel{h}{\longrightarrow} \hat{Y}$
\citep{ganin2016domain, zhao19a}. Let $\djs$ denote the JS distance which is the square root of JS divergence \citep{endres2003new}. Let $\varepsilon_{S}\left(h \circ g\right)$ and $\varepsilon_{T}\left(h \circ g\right)$ denote the error of the learned model in the source and target domain respectively. Then we have the following theorem on the lower bound for joint error (the proofs of theorems are provided in the appendix):
\begin{theorem}
\label{thm:lower}
Suppose the Markov chains hold, then a lower bound for the joint error on the source and target domains is: 
\vspace{-1.5em}

\begin{equation}
 \varepsilon_{S}\left(h \circ g\right)+\varepsilon_{T}\left(h \circ g\right) \geq \frac{1}{2}
 \djs(\gD^{Y_S}, \gD^{Y_T})^2
\end{equation}
\end{theorem}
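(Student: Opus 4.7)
The plan is to adopt the metric-based template of \citet{zhao19a}. Since IRAD is designed so that the shared encoder makes $\gD^{Z_S} = \gD^{Z_T}$, and since $h$ is the same deterministic map applied on top of $g$ in both domains, the predicted-label marginals coincide, $\gD^{\hat Y_S} = \gD^{\hat Y_T}$. I will treat this invariance as built into the statement; without it one still recovers the looser Zhao--Chen--Zhang form in which $\djs(\gD^{Z_S}, \gD^{Z_T})$ is subtracted before squaring, and the clean display in the theorem is precisely the perfect-invariance case.

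First, I would invoke the Endres--Schindelin result that $\djs$ is a proper metric and apply its triangle inequality through the two predicted-label marginals,
\begin{equation*}
\djs(\gD^{Y_S}, \gD^{Y_T}) \le \djs(\gD^{Y_S}, \gD^{\hat Y_S}) + \djs(\gD^{\hat Y_S}, \gD^{\hat Y_T}) + \djs(\gD^{\hat Y_T}, \gD^{Y_T}),
\end{equation*}
then drop the middle term by invariance.

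Second, I would convert each surviving term into a per-domain error. For any domain, the pair $(Y, \hat Y)$ with $\hat Y = h(g(X))$ forms a natural coupling of $\gD^Y$ and $\gD^{\hat Y}$, so the coupling characterization of total variation yields $\mathrm{TV}(\gD^Y, \gD^{\hat Y}) \le \Pr(\hat Y \neq Y) = \varepsilon(h \circ g)$; combined with the classical inequality of Lin, $\mathrm{JS}(P,Q) \le \mathrm{TV}(P,Q)$ under the base-$2$ logarithm, this gives $\djs(\gD^Y, \gD^{\hat Y}) \le \sqrt{\varepsilon(h \circ g)}$. Chaining these bounds produces
\begin{equation*}
\djs(\gD^{Y_S}, \gD^{Y_T}) \le \sqrt{\varepsilon_S(h\circ g)} + \sqrt{\varepsilon_T(h\circ g)},
\end{equation*}
and squaring while using the elementary $(\sqrt{a}+\sqrt{b})^2 \le 2(a+b)$ yields exactly $\varepsilon_S + \varepsilon_T \ge \tfrac12\, \djs(\gD^{Y_S}, \gD^{Y_T})^2$.

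The step I expect to require the most care is the $\djs \le \sqrt{\varepsilon}$ conversion: one must fix a log-base convention so that $\mathrm{JS} \le \mathrm{TV}$ holds cleanly (base $2$ works and keeps $\mathrm{JS} \in [0,1]$), and one must verify that the joint law of $(Y, \hat Y)$ induced by the labelling function and $h \circ g$ is indeed a valid coupling of the two marginals entering $\djs$. Everything else is triangle-inequality bookkeeping; the structure mirrors the classification domain-adaptation argument of Zhao et al., here specialised to the binary normal/abnormal label space of semi-supervised anomaly detection, and in the invariant regime it transparently explains the overtraining decay in Fig.~\ref{fig:epochs}: pushing invariance harder cannot reduce the right-hand side, so at least one of $\varepsilon_S$, $\varepsilon_T$ must eventually grow whenever $\gD^{Y_S} \neq \gD^{Y_T}$.
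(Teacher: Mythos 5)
Your proposal is correct and takes essentially the same route as the paper's proof: a triangle inequality for the JS distance through the prediction marginal(s), Lin's inequality to bound each remaining leg by $\sqrt{\varepsilon_S(h\circ g)}$ and $\sqrt{\varepsilon_T(h\circ g)}$, and the elementary $(\sqrt{a}+\sqrt{b})^2 \le 2(a+b)$ to conclude. The only cosmetic differences are that you pass through both $\gD^{\hat Y_S}$ and $\gD^{\hat Y_T}$ and state the invariance assumption $\gD^{\hat Y_S}=\gD^{\hat Y_T}$ explicitly (the paper does the same implicitly by writing a single $\gD^{\hat Y}$), and that you bound the total variation via the coupling $(Y,\hat Y)$ rather than expanding the binary $L^1$ norm directly as the paper does.
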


\vspace{-0.5em}

\textbf{Remark:} Since the definitions of normal data are different in source and target domains,
$\djs(\gD^{Y_S}, \gD^{Y_T}) > 0$.
This term is dataset-intrinsic and independent of the learning models. The lower bound explains the phenomenon in \Cref{fig:epochs}: overtraining to minimize $\varepsilon_S$ actually increases the error on the target domain $\varepsilon_T$. Learning without adaptation (e.g. IF (S+T)) can have small $\varepsilon_S$ but still large error in the target domain. This lower bound also holds for other domain adaptation anomaly detection methods that use invariant representations. This theorem reveals that to have a well-performing model on the target domain, one needs to balance between learning effective invariant representations for accurate AD on the source domain while accommodating the target domain data. This trade-off is hard to avoid and is a consequence of our assumption that the data for $T$ is insufficient for accurate training of the model. 
We use cross-validation to estimate the optimal number of training epochs as mentioned before.


We also derive an upper bound for the generalization error. Let $f_S$, $f_T$ be the true labeling function for the source and target domains respectively. Let $\widehat{D}_S$ and $\widehat{D}_T$ denote the empirical source and target distributions from source domain samples $\mathbf{S}$
and target domain samples $\mathbf{T}$ of size $n_t$. Then:

\begin{theorem}
\label{thm:upper}
For a hypothesis space $\gH \subseteq [0, 1]^{X}$, $\forall h \in \gH$, $\forall \delta>0$, w.p.\ at least $1-\delta$:

\vspace{-1.7em}

\begin{align*}
\varepsilon_{T}(h) \leq & \;\widehat{\varepsilon}_{S}(h)+d_{\tilde{\gH}}\left(\widehat{\mathcal{D}}_{S}, \widehat{\mathcal{D}}_{T}\right)+2 \Rad_{\mathbf{S}}(\gH)+2 \Rad_{\mathbf{S}}(\tilde{\gH}) + 2 \Rad_{\mathbf{T}}(\tilde{\gH})\\[-2pt]
&+\min \left\{\mathbb{E}_{\mathcal{D}_{S}}\left[\left|f_{S}-f_{T}\right|\right], \mathbb{E}_{\mathcal{D}_{T}}\left[\left|f_{S}-f_{T}\right|\right]\right\} + O(\sqrt{\log (1 / \delta) / n_t})
\end{align*}
where $\tilde{\gH} := \left\{\operatorname{sgn}\left(\left|h(\mathbf{x})-h^{\prime}(\mathbf{x})\right|-t\right) | \, h, h^{\prime} \in \gH, t \in[0,1]\right\}$. $\Rad_{\mathbf{S}}$ denotes the empirical Rade\-ma\-cher complexity w.r.t.\ samples $\mathbf{S}$ (see the formal definition in the appendix).
\end{theorem}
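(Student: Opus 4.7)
The plan is to obtain the bound by a three-stage reduction: a population-level domain-adaptation inequality relating $\varepsilon_T(h)$ to $\varepsilon_S(h)$ via the $\tilde{\gH}$-discrepancy and the labeling mismatch $|f_S-f_T|$; two applications of Rademacher-based uniform convergence (one for the loss class $\gH$, one for the witness class $\tilde{\gH}$ of the discrepancy); and finally a union bound that stitches the high-probability events together.

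For the first stage, I would start from $\varepsilon_T(h) = \mathbb{E}_{\mathcal{D}_T}|h-f_T|$, insert $\pm f_S$ inside and apply the triangle inequality to get $\varepsilon_T(h) \leq \mathbb{E}_{\mathcal{D}_T}|h-f_S| + \mathbb{E}_{\mathcal{D}_T}|f_S-f_T|$. Then I would rewrite the first summand as $\mathbb{E}_{\mathcal{D}_S}|h-f_S| + \bigl(\mathbb{E}_{\mathcal{D}_T}|h-f_S|-\mathbb{E}_{\mathcal{D}_S}|h-f_S|\bigr)$ and control the parenthesized difference by $d_{\tilde{\gH}}(\mathcal{D}_S,\mathcal{D}_T)$: using the layer-cake identity $|h(\vx)-f_S(\vx)| = \int_0^1 \mathbf{1}\{|h(\vx)-f_S(\vx)|>t\}\,dt$, each indicator lies in $\tilde{\gH}$ (after absorbing $f_S$ into $\gH$ as is standard, or via the usual approximation), so integrating the supremum in the definition of $d_{\tilde{\gH}}$ over $t\in[0,1]$ yields the claim. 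A symmetric derivation, routing $\pm f_T$ through the source expectation, produces the analogous bound with $\mathbb{E}_{\mathcal{D}_S}|f_S-f_T|$; taking the weaker of the two delivers the $\min$ term in the theorem.

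For the second stage, I would invoke two standard Rademacher-complexity generalization bounds. With probability $\geq 1-\delta/3$, uniformly over $h\in \gH$, $\varepsilon_S(h) \leq \widehat{\varepsilon}_S(h) + 2\Rad_{\mathbf{S}}(\gH) + O\bigl(\sqrt{\log(1/\delta)/n_t}\bigr)$ by the symmetrization argument applied to the bounded loss $|h-f_S|$. Separately, with probability $\geq 1-2\delta/3$, $d_{\tilde{\gH}}(\mathcal{D}_S,\mathcal{D}_T) \leq d_{\tilde{\gH}}(\widehat{\mathcal{D}}_S,\widehat{\mathcal{D}}_T) + 2\Rad_{\mathbf{S}}(\tilde{\gH}) + 2\Rad_{\mathbf{T}}(\tilde{\gH}) + O\bigl(\sqrt{\log(1/\delta)/n_t}\bigr)$, obtained by applying uniform convergence separately to $\mathbb{E}_{\mathcal{D}_S} g$ and $\mathbb{E}_{\mathcal{D}_T} g$ inside the supremum over $g\in \tilde{\gH}$ and union-bounding across the two samples. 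I would express all concentration terms in terms of the smaller sample size $n_t$, since it controls the slower rate. A final union bound and substitution into the population inequality from Stage 1 produces exactly the stated inequality.

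The main obstacle is the subtlety in Stage 1: the class $\tilde{\gH}$ consists of \emph{thresholded indicators} $\operatorname{sgn}(|h-h'|-t)$, not real-valued differences, so one must carefully convert the expectation gap $|\mathbb{E}_{\mathcal{D}_T}|h-f_S|-\mathbb{E}_{\mathcal{D}_S}|h-f_S||$ into a supremum over these thresholded functions. The threshold parameter $t$ in the definition of $\tilde{\gH}$ is precisely what makes the layer-cake conversion go through, and the argument hinges on the (standard) assumption that $f_S$ can be absorbed into $\gH$ so that $|h-f_S|$ is realizable as $|h-h'|$ for some $h'\in \gH$; otherwise an additional triangle step is needed. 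Once this step is set up carefully, the remainder is a routine combination of Ben-David/Mansour style discrepancy reasoning with Rademacher generalization.
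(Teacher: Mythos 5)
Your proposal is correct and follows essentially the same route as the paper: the population-level inequality you derive in Stage 1 is exactly the paper's Lemma 4 (imported from \citet{zhao19a}), and your Stage 2--3 combination of Rademacher uniform convergence for the source error and for the $\tilde{\gH}$-discrepancy, followed by a union bound and absorbing the $O(\sqrt{\log(1/\delta)/n})$ terms into $O(\sqrt{\log(1/\delta)/n_t})$ since $n_t \ll n$, mirrors the paper's Lemmas 1--3 and final assembly. The only difference is that you re-derive the population domain-adaptation inequality (correctly flagging the layer-cake/thresholding step and the need to realize $|h-f_S|$ within $\tilde{\gH}$), whereas the paper simply cites it as a black-box lemma.
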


\vspace{-0.5em}

\textbf{Remark:} this bound is formed by the following 5 components (left to right): (1) empirical error on $S$, (2) distance between the training sets of $S$ and $T$, (3) complexity measures of $\gH$ and $\tilde{\gH}$, (4) differences in labels between source and target, (5) error caused by limited target domain samples.

\section{Discussion}

\label{sec:abl}
\textbf{Ablation Study of Objective Functions.} To better understand the objective functions of IRAD, we conduct the following ablation studies by removing certain terms in the training process. We first investigate \Cref{eq:sim} that encourages the similarity between the shared encodings of the source and target data. Ideally, the shared encodings $E_{sh}(\vx_{tgt})$ and $E_{sh}(\vx_{src})$ should reside in the same region. For the purpose of illustration, we visualize $E_{sh}(\vx_{tgt})$ and $E_{sh}(\vx_{src})$ in 2D by linear PCA as shown in the left sub-figures of \Cref{fig:sim}(a) and \Cref{fig:sim}(b). With the similarity objective function in \Cref{eq:sim}, $E_{sh}(\vx_{tgt})$ and $E_{sh}(\vx_{src})$ are close in the latent space (\Cref{fig:sim}(a)); without \Cref{eq:sim}, $E_{sh}(\vx_{tgt})$ and $E_{sh}(\vx_{src})$ are apart (\Cref{fig:sim}(b)). We also plotted the magnitude of normalized inner products between 10 $E_{sh}(\vx_{tgt})$ and $E_{sh}(\vx_{src})$ in the right sub-figures in \Cref{fig:sim}(a) and \Cref{fig:sim}(b). The results indicate that optimizing with \Cref{eq:sim} indeed makes $E_{sh}(\vx_{tgt})$ and $E_{sh}(\vx_{src})$ close numerically.

\begin{figure}
  \includegraphics[width=\columnwidth]{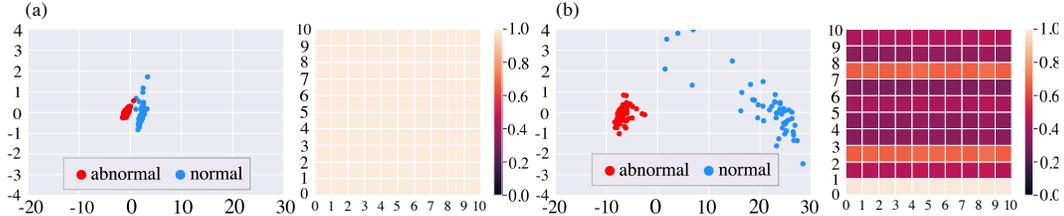}
  \vspace{-1.5em}  
  \caption{Ablation study (digit 7, MNIST$\rightarrow$USPS) on the similarity objective function in \Cref{eq:sim}. Part (a) is training with \Cref{eq:sim}: the left figure shows the 2D linear PCA projection of $E_{sh}(\vx_{tgt})$ and $E_{sh}(\vx_{src})$. The right sub-figure shows the magnitude of normalized inner products of ten randomly selected $E_{sh}(\vx_{src})$ and $E_{sh}(\vx_{tgt})$. Part (b) is trained without \Cref{eq:sim}. $E_{sh}(\vx_{tgt})$ and $E_{sh}(\vx_{src})$ are geometrically and numerically apart from each other in this case.}
  \label{fig:sim}
  \vspace{-0.5em}
\end{figure}

We further study the cycle-consistency losses in \Cref{eq:cycle}. We find them critical in Office-Home dataset evaluations. Training without them can lead to more than 10\% decrease in performance. We visualize the extracted features from the normal and abnormal target data, $E_{sh}(\vx_{nor})$ and $E_{sh}(\vx_{abn})$, in 2D with PCA. Ideally, $E_{sh}(\vx_{nor})$ and $E_{sh}(\vx_{abn})$ should be separated so the abnormal can be detected. This is what we observe when training with the full model (the first column of \Cref{fig:norm_abn}(a)). However, if optimized without \Cref{eq:cycle}, encoded normal and abnormal data are mixing together (the second column in \Cref{fig:norm_abn}(a)). We also investigate term $\vx_{rnd}$ in \Cref{eq:adv}. Removing $\vx_{rnd}$ from the adversarial training results in $E_{sh}(\vx_{nor})$ and $E_{sh}(\vx_{abn})$ mingling together (the third column of \Cref{fig:norm_abn}(a)). We conjecture that for high dimensional data like images, it is challenging for the discriminator to form an effective decision boundary \citep{rcgan}, therefore additional regularization terms ($\vx_{rnd}$) and objective functions (cycle-consistent losses) are helpful for modeling the normal data distribution. 

\begin{figure}
  \includegraphics[width=\columnwidth]{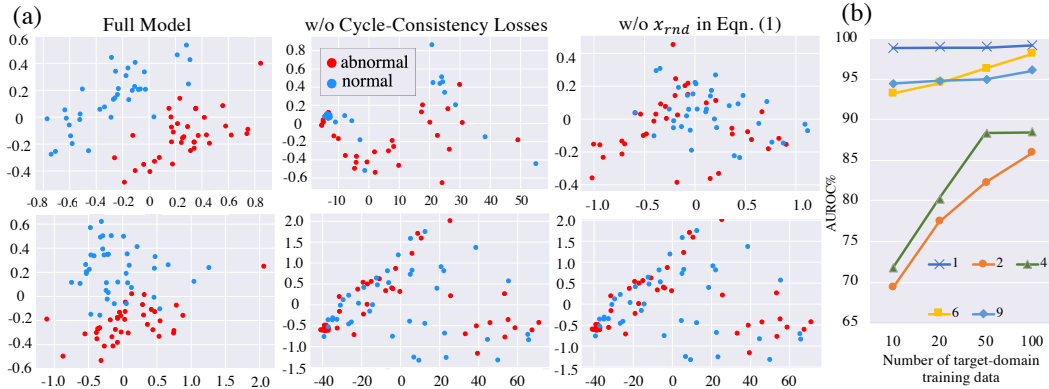}
  \vspace{-1.5em}    
  \caption{(a) Invariant representations of normal (blue) and abnormal (red) target domain data. The first and second rows are ``Calculator'' and ``Pan'' (``Product''$\rightarrow$``Clip Art''). The first column is training with the full model and normal and abnormal encodings are well separated. The second column is training without the cycle-consistency losses. Third column is from removing the term $\vx_{rnd}$ in \Cref{eq:adv}. Normal and abnormal data in the two later cases are mixing, making detection hard. (b) AUROC on MNIST$\rightarrow$USPS with different numbers of target-domain training data. We only present 5 digits due to the space limit; the full results are provided in the appendix.}
  \label{fig:norm_abn}
  \vspace{-1em}
\end{figure}

\textbf{Effects of the number of target domain training data.} We investigate the performance of IRAD w.r.t.\ the number of available target-domain training data $n_t$. The results are presented in \Cref{fig:norm_abn}(b) with $n_t = 10, 20, 50, 100$. IRAD is able to leverage more target data to achieve better performance.




\vspace{-0.5em}
\section{Conclusion}
\vspace{-0.5em}

We studied the domain adaptation problem in anomaly detection. The proposed method IRAD first learns invariant representations between the source and target domains. This is achieved by isolating the shared encodings from domain-specific encodings through adversarial learning and enforcing subspace similarity/dissimilarity. The domain-invariant representations are then used train an anomaly detection model in the target domain. We show that IRAD significantly outperform baseline models in most experiments on digits and high-dimensional object recognition datasets. We prove a lower bound for the joint error and an generalization upper bound. Experimental observations corroborate our theoretical results. 

\section*{Broader Impact}
As described and benchmarked in this paper, IRAD should have neutral societal consequences. The experimental benchmarks in the paper are standard academic datasets and do not involve human subjects. It is possible however to apply these methods for video security and surveillance. Although in principle there is no explicit bias in our algorithms, it is possible that the outcome of the prediction be biased. This may occur if there is a bias in the training set or if the source and target domains are biased. Generally speaking, anomaly detection algorithms applied to human subjects should be benchmarked and validated extensively to avoid racial/gender/age/religion/disability bias and other types of biases.



\bibliography{neurips_2020}
\bibliographystyle{neurips_2020}

\newpage
\appendix
\section{Experiments with different numbers of target-domain training data}

\begin{figure}[!htbp]
  \centering
  \includegraphics[width=0.5\columnwidth]{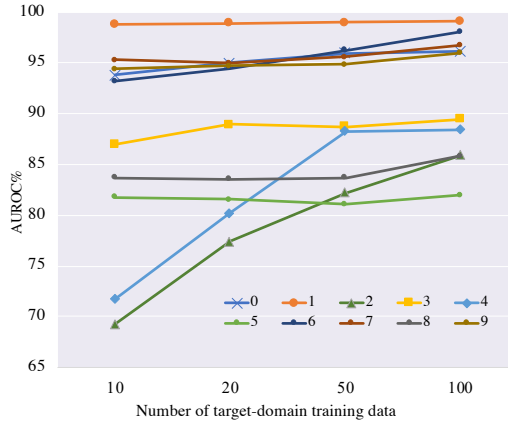}
  \caption{AUROC on MNIST-USPS experiments of all ten categories with the number of target-domain training data $n_t = 10, 20, 50, 100$. The model performance increases as more target-domain data are available for training.}
  \label{fig:appd_1}
\end{figure}

\section{Examples of images in Office-Home dataset for evaluation}
\begin{figure}[!htbp]
  \centering
  \includegraphics[width=\columnwidth]{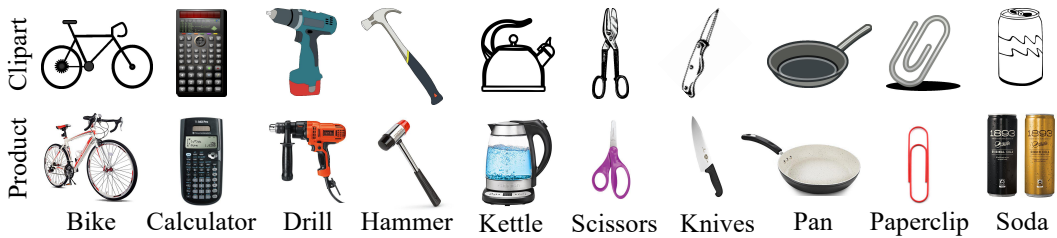}
  \caption{Examples of ten categories in Clip Art and Product domain from Office-Home dataset.}
  \label{fig:dataset}
\end{figure}

\section{Proof of Information-Theoretic Lower Bound:}
The proof for \Cref{thm:lower} is as follows:
\begin{proof}
$Y_S$ are defined as the labeling function from $X$ to $Y$ for the source domain and $Y_T$ as the map for the target domain. We assume that $Y$ is 1 when the data is anomalous and 0 otherwise. Since the JS distance is a metric, we have the following inequality:
\begin{equation}
    \label{eq:trineq}
    \djs(\gD^{Y_S}, \gD^{Y_T}) \leq \djs(\gD^{Y_S}, \gD^{\hat{Y}}) + \djs(\gD^{\hat{Y}}, \gD^{Y_T})
\end{equation}
We define $\varepsilon_S(h \circ g) = \varepsilon_S(\hat{Y})$ as 
$$\varepsilon_S(h \circ g)
= \mathbb E_X(|Y_S(X) - h \circ g(X)|)$$
and similarly for $\varepsilon_T(h \circ g)$. We can bound $\djs(\gD^{Y_S}, \gD^{\hat{Y}})$ by $\sqrt{\varepsilon_S(h \circ g)}$ (\cite{lin1991divergence,zhao19a})
\begin{align*}
    \djs(\gD^{Y_S}, \gD^{\hat{Y}}) &=
    \sqrt{ D_\text{JS}(\gD^{Y_S}, \gD^{\hat{Y}}) }
    \le \sqrt{ \frac{1}{2} \| \gD^{Y_S} - \gD^{\hat{Y}} \|_1 } \\
    &= \sqrt{ \frac{1}{2} \Big(
    \big|\text{Pr}(Y_S=0) - \text{Pr}(\hat{Y}=0) \big|
    +
    \big|\text{Pr}(Y_S=1) - \text{Pr}(\hat{Y}=1) \big|
    \Big)} \\
    & = \sqrt{ \big|\text{Pr}(Y_S=1) - \text{Pr}(\hat{Y}=1) \big| }
    = \sqrt{ \big|\mathbb E_X(Y_S) - \mathbb E_X(\hat{Y}) \big| } \\
    &\le \sqrt{\varepsilon_S(h \circ g)}
\end{align*}

With \Cref{eq:trineq}, we get:
\begin{equation}
\label{eq:error}
    \djs(\gD^{Y_S}, \gD^{Y_T})
    \leq \sqrt{\varepsilon_S} + \sqrt{\varepsilon_T}
\end{equation}
This can be rewritten as:
$$
    \varepsilon_S\left(h \circ g\right) + \varepsilon_T\left(h \circ g\right) 
    \geq 
    \frac{1}{2} \; 
    \djs(\gD^{Y_S}, \gD^{Y_T})^2
$$
\end{proof}

\section{Proof of Generalization Upper Bound:}

We start with introductions of notations and definitions. Recall $$\tilde{\gH}:=\left\{\operatorname{sgn}\left(\left|h(\mathbf{x})-h^{\prime}(\mathbf{x})\right|-t\right) | \, h, h^{\prime} \in \gH, t \in[0,1]\right\}$$
Let $\widehat{D}$ denote the empirical distribution from samples $x \sim \gD$ of size $n$. The empirical Rademacher complexity is defined as follows:
\begin{definition}[Empirical Rademacher Complexity]
Let $\gH$ be a family of functions mapping from $X$ to $[a, b]$. Let $\mathbf{S} = \{\vx_i\}_{i = 1}^n$ denote a fixed sample of size $n$ with elements in $X$. Then, the empirical Rademacher complexity of $\gH$ with respect to the sample $X$ is defined as:
$$
\Rad_\mathbf{S}(\mathcal{H}):=\mathbb{E}_{\boldsymbol{\sigma}}\left[\sup _{h \in \mathcal{H}} \frac{1}{n} \sum_{i=1}^{n} \sigma_{i} h\left(\mathbf{x}_{i}\right)\right]
$$
where $\sigma=\left\{\sigma_{i}\right\}_{i=1}^{n}$ and $\sigma_{i}$ are i.i.d.\ uniform random variables taking values in $\{+1,-1\}$.
\end{definition}

We then have the following lemmas.
\begin{lemma}[\citet{zhao19a}]
\label{lemma1}
Let $\gH \subseteq [0, 1]^X$, then for all $\delta > 0$, w.p.\ at least $1 - \delta$, the following inequality holds for all $h\in \gH$:\ $\varepsilon_{S}(h) \leq \widehat{\varepsilon}_{S}(h)+2 \Rad_{\mathbf{S}}(\gH)+3 \sqrt{\log (2 / \delta)/2n}$, where $n$ is the number of samples in $\mathbf{S}$.
\end{lemma}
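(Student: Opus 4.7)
The plan is to apply the classical three-step symmetrization-plus-McDiarmid argument, carefully tracking constants so the additive slack comes out to exactly $3\sqrt{\log(2/\delta)/(2n)}$. I would define the one-sided uniform deviation
$$
\Phi(\mathbf{S}) := \sup_{h \in \gH}\bigl(\varepsilon_S(h) - \widehat{\varepsilon}_S(h)\bigr),
$$
so that the stated inequality is equivalent to the high-probability bound $\Phi(\mathbf{S}) \le 2\Rad_{\mathbf{S}}(\gH) + 3\sqrt{\log(2/\delta)/(2n)}$. Since the risks in this paper use the absolute-value loss (as made explicit in the proof of \Cref{thm:lower}) and hypotheses lie in $[0,1]^X$, each per-sample loss $|h(\vx_i) - y_i|$ belongs to $[0,1]$.

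The first step is to verify the bounded-differences property of $\Phi$. Replacing a single coordinate of $\mathbf{S}$ changes $\widehat{\varepsilon}_S(h)$ by at most $1/n$ uniformly in $h$, so $\Phi$ itself changes by at most $1/n$; McDiarmid's inequality then delivers, w.p.\ at least $1-\delta/2$,
$$
\Phi(\mathbf{S}) \le \E[\Phi(\mathbf{S})] + \sqrt{\log(2/\delta)/(2n)}.
$$
The second step is the textbook ghost-sample symmetrization: introducing an independent copy $\mathbf{S}'$ and Rademacher signs $\sigma_i\in\{\pm 1\}$, Jensen's inequality moves the supremum inside the expectation and yields $\E[\Phi(\mathbf{S})] \le 2\,\E_{\mathbf{S}}[\Rad_{\mathbf{S}}(\gH)]$. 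The third step is to replace the expected Rademacher complexity by its empirical counterpart. Viewed as a function of $\mathbf{S}$, the map $\mathbf{S}\mapsto \Rad_{\mathbf{S}}(\gH)$ again has bounded differences $1/n$ (the inner supremum over $h$ cannot amplify a single-coordinate change), so a second McDiarmid application gives $\E_{\mathbf{S}}[\Rad_{\mathbf{S}}(\gH)] \le \Rad_{\mathbf{S}}(\gH) + \sqrt{\log(2/\delta)/(2n)}$ w.p.\ at least $1-\delta/2$. A union bound over the two concentration events and summation of the slack terms yields $2\Rad_{\mathbf{S}}(\gH) + (2+1)\sqrt{\log(2/\delta)/(2n)}$, which is exactly the claim.

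The main difficulty is not mathematical but combinatorial bookkeeping: (i) confirming that with $h\in[0,1]^X$ and binary $y$ the bounded-differences constant is $1/n$ rather than $2/n$, since that choice propagates into the $\sqrt{1/(2n)}$ rate; and (ii) splitting $\delta$ symmetrically across the two McDiarmid invocations so that both tail terms carry the factor $\log(2/\delta)$ and the three additive pieces collapse cleanly into the advertised coefficient $3$. Because the result is essentially the restatement of a standard fact attributed to \citet{zhao19a}, I would cross-check the final constants against their write-up rather than re-deriving sharper versions.
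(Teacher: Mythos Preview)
The paper does not actually prove this lemma; it is stated with attribution to \citet{zhao19a} and used as a black box in the proof of \Cref{thm:upper}. Your proposal supplies the standard McDiarmid--symmetrization--McDiarmid argument (essentially Mohri et al.'s textbook proof), and the constant bookkeeping that produces the coefficient $3$ and the $\log(2/\delta)$ factor is correct.

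One small omission worth flagging: your second step writes $\E[\Phi(\mathbf{S})] \le 2\,\E_{\mathbf{S}}[\Rad_{\mathbf{S}}(\gH)]$ directly, but symmetrization applied to the absolute-value loss first gives the Rademacher complexity of the loss class $\{(\vx,y)\mapsto |h(\vx)-y|:h\in\gH\}$, not of $\gH$ itself. You need a one-line appeal to Talagrand's contraction lemma (the map $u\mapsto|u-y|$ is $1$-Lipschitz) to pass from the former to the latter. This does not change any constants, so the final bound is unaffected, but the step should be made explicit if you are writing out the proof rather than citing it.
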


\begin{lemma}[\citet{zhao19a}]
\label{lemma2}
$\forall \delta > 0$, w.p. at least $1 - \delta$, the following inequality holds:
$$d_{\tilde{\mathcal{H}}}(\mathcal{D}, \widehat{\mathcal{D}}) \leq 2 \Rad_\mathbf{S}(\tilde{\mathcal{H}})+3 \sqrt{\log (2 / \delta) / 2 n}.$$
\end{lemma}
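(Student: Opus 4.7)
The plan is to establish this as a standard uniform convergence result, bounding the $\tilde{\gH}$-divergence between a distribution and its empirical counterpart by the empirical Rademacher complexity of $\tilde{\gH}$ plus concentration terms. Since $\tilde{\gH}$ consists of $\{0,1\}$-valued functions (the sign thresholdings of $|h-h'|$), the $\tilde{\gH}$-divergence admits the convenient form $d_{\tilde{\gH}}(\gD, \widehat{\gD}) = 2\sup_{h \in \tilde{\gH}} \bigl|\E_{\vx \sim \gD}[h(\vx)] - \frac{1}{n}\sum_{i=1}^n h(\vx_i)\bigr|$. The proof then reduces to controlling this supremum, for which I would follow the classical three-step recipe: McDiarmid, symmetrization, then empirical-Rademacher concentration.

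First I would define $\Phi(\mathbf{S}) := \sup_{h \in \tilde{\gH}} \bigl|\E_\gD[h] - \frac{1}{n}\sum_i h(\vx_i)\bigr|$ and verify that $\Phi$ satisfies the bounded-differences property with coefficient $1/n$ (since $h \in [0,1]$, swapping one sample changes any summand by at most $1/n$). McDiarmid's inequality then yields, with probability at least $1 - \delta/2$, that $\Phi(\mathbf{S}) \le \E[\Phi(\mathbf{S})] + \sqrt{\log(2/\delta)/(2n)}$. Next I would apply the standard ghost-sample symmetrization argument: introducing an independent copy $\mathbf{S}'$ and Rademacher variables $\sigma_i \in \{\pm 1\}$, Jensen's inequality and the symmetry of $\sigma_i$ give $\E[\Phi(\mathbf{S})] \le 2\, \E_{\mathbf{S}, \sigma}\bigl[\sup_{h} \frac{1}{n}\sum_i \sigma_i h(\vx_i)\bigr] = 2\,\Rad(\tilde{\gH})$, the expected (population) Rademacher complexity.

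The third step is to pass from expected to empirical Rademacher complexity. The map $\mathbf{S} \mapsto \Rad_{\mathbf{S}}(\tilde{\gH})$ again has bounded differences with coefficient $1/n$, so another application of McDiarmid gives, with probability at least $1 - \delta/2$, $\Rad(\tilde{\gH}) \le \Rad_{\mathbf{S}}(\tilde{\gH}) + \sqrt{\log(2/\delta)/(2n)}$. Taking a union bound over the two high-probability events (each of failure probability $\delta/2$) and chaining the inequalities produces
\[
\sup_{h \in \tilde{\gH}} \bigl|\E_\gD[h] - \E_{\widehat{\gD}}[h]\bigr| \;\le\; \Rad_{\mathbf{S}}(\tilde{\gH}) + \tfrac{3}{2}\sqrt{\log(2/\delta)/(2n)},
\]
and then multiplying by $2$ to account for the factor of $2$ in the definition of the $\tilde{\gH}$-divergence yields the claimed bound $d_{\tilde{\gH}}(\gD, \widehat{\gD}) \le 2\,\Rad_{\mathbf{S}}(\tilde{\gH}) + 3\sqrt{\log(2/\delta)/(2n)}$.

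The technical tools (McDiarmid, symmetrization, Rademacher concentration) are all textbook, so the real obstacle is bookkeeping: verifying that the threshold-based construction of $\tilde{\gH}$ still produces $[0,1]$-valued functions so the bounded-differences constant is indeed $1/n$, keeping the factor of $2$ from the divergence definition cleanly separated from the factor of $2$ coming from symmetrization (so they do not compound incorrectly), and tuning the union-bound split to land exactly on the $\log(2/\delta)$ inside the square root and the constant $3$ in front. Modulo these constant-tracking details, the proof is a direct instantiation of the standard Rademacher-based uniform convergence template applied to the function class $\tilde{\gH}$.
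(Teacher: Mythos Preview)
The paper does not actually prove this lemma; it is simply quoted from \citet{zhao19a} and then used as a black box in the proof of \Cref{lemma3}. So there is no in-paper argument to compare against.

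Your sketch is the standard Rademacher uniform-convergence template (McDiarmid $\to$ symmetrization $\to$ McDiarmid again on the empirical Rademacher complexity) and it is correct at the level of strategy and constants. One small point worth tightening: you assert that $\tilde{\gH}$ consists of $[0,1]$-valued functions, but as written in the paper $\tilde{\gH}$ uses $\operatorname{sgn}(\cdot)$, which is conventionally $\{-1,+1\}$-valued. If you take that convention literally, the bounded-differences coefficient becomes $2/n$ rather than $1/n$ and the constants shift. In the domain-adaptation literature the intended reading is the indicator $\mathbf{1}\{|h-h'|>t\}\in\{0,1\}$, so the $\gH$-divergence can be written as $2\sup_{g\in\tilde{\gH}}|\E_{\gD}[g]-\E_{\widehat{\gD}}[g]|$ as you do; just make that identification explicit before invoking the $1/n$ bounded-differences constant. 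With that clarification, your argument yields exactly the stated bound.
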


With \Cref{lemma1} and \Cref{lemma2}, we can derive:
\begin{lemma}
\label{lemma3}
For $\forall \delta>0$, w.p.\ at least $1-\delta,$ for $\forall h \in \tilde{\gH}$:
$$
d_{\tilde{\gH}}(\mathcal{D}_S, \mathcal{D}_T) \leq d_{\tilde{\gH}}(\widehat{\mathcal{D}}_S, \widehat{\mathcal{D}}_T)+ 2 \Rad_{\mathbf{S}}(\tilde{\gH}) + 2 \Rad_{\mathbf{T}}(\tilde{\gH}) + 3 \sqrt{\log (4 / \delta)/2n} + 3 \sqrt{\log (4 / \delta)/2n_t}$$
\end{lemma}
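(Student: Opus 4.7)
The plan is to prove Lemma~3 by combining the triangle inequality for the $d_{\tilde{\gH}}$ pseudo-metric with two independent applications of Lemma~2 (one on the source side, one on the target side), and then taking a union bound to ensure both high-probability statements hold simultaneously.

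First, I would verify (or invoke as standard) the triangle inequality for $d_{\tilde{\gH}}$. Since the discrepancy is defined via $d_{\tilde{\gH}}(\gD, \gD') = \sup_{h \in \tilde{\gH}} |\E_{\gD}[h] - \E_{\gD'}[h]|$, inserting any intermediate distribution inside the absolute value and applying the ordinary triangle inequality for $|\cdot|$ before taking the supremum yields
\begin{equation*}
d_{\tilde{\gH}}(\gD_S, \gD_T) \le d_{\tilde{\gH}}(\gD_S, \widehat{\gD}_S) + d_{\tilde{\gH}}(\widehat{\gD}_S, \widehat{\gD}_T) + d_{\tilde{\gH}}(\widehat{\gD}_T, \gD_T).
\end{equation*}
This reduces the task to bounding the first and third summands by population-to-empirical concentration results.

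Next, I would invoke Lemma~2 twice, once with $\gD = \gD_S$ and sample $\mathbf{S}$ of size $n$, and once with $\gD = \gD_T$ and sample $\mathbf{T}$ of size $n_t$. With confidence parameter $\delta/2$ for each application, each of $d_{\tilde{\gH}}(\gD_S, \widehat{\gD}_S) \le 2 \Rad_{\mathbf{S}}(\tilde{\gH}) + 3\sqrt{\log(4/\delta)/2n}$ and $d_{\tilde{\gH}}(\gD_T, \widehat{\gD}_T) \le 2 \Rad_{\mathbf{T}}(\tilde{\gH}) + 3\sqrt{\log(4/\delta)/2n_t}$ holds w.p.\ at least $1-\delta/2$. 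A union bound ensures that both hold simultaneously w.p.\ at least $1-\delta$. Substituting these two upper bounds into the triangle-inequality expansion yields exactly the claim of the lemma.

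The main ``obstacle'' is really bookkeeping rather than mathematical depth: one must correctly split the failure probability $\delta$ between the two concentration events so that the final $\log$ term becomes $\log(4/\delta)$ rather than $\log(2/\delta)$, and one must ensure that the samples $\mathbf{S}$ and $\mathbf{T}$ are drawn independently so that the union bound is valid (this is guaranteed by the i.i.d.\ assumption in the problem statement). There are no further technical subtleties: the bound is a routine corollary of Lemma~2 once the correct triangle inequality is set up, and the Rademacher terms simply add because each side of the triangle contributes its own complexity penalty from its own sample.
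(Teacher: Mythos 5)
Your proof is correct and follows essentially the same route as the paper's: the triangle inequality for $d_{\tilde{\gH}}$ with the two empirical distributions inserted, two applications of Lemma~\ref{lemma2} at confidence $\delta/2$ each (yielding the $\log(4/\delta)$ terms), and a union bound. One minor remark: the union bound is valid regardless of whether $\mathbf{S}$ and $\mathbf{T}$ are independent, so no independence argument is actually needed at that step.
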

\begin{proof}
The triangular inequality of $d_{\tilde{\gH}}(\cdot, \cdot)$ is written as: 
$$d_{\tilde{\mathcal{H}}}\left(\mathcal{D}, \mathcal{D}^{\prime}\right) \leq d_{\tilde{\mathcal{H}}}(\mathcal{D}, \widehat{\mathcal{D}})+d_{\tilde{\mathcal{H}}}(\widehat{\mathcal{D}}, \widehat{\mathcal{D}}^{\prime})+d_{\tilde{\mathcal{H}}}(\widehat{\mathcal{D}}^{\prime}, \mathcal{D}^{\prime}).$$

By \Cref{lemma2}, it follows that with probability $\geq 1 - \delta/2$, the following two inequalities hold:
\begin{align*}
d_{\tilde{\mathcal{H}}}(\mathcal{D}_S, \widehat{\mathcal{D}}_S) &\leq 2 \Rad_{\mathbf{S}}(\tilde{\mathcal{H}})+3 \sqrt{\log (4 / \delta) / 2 n}\\
d_{\tilde{\mathcal{H}}}(\mathcal{D}_T, \widehat{\mathcal{D}}_T) &\leq 2 \Rad_{\mathbf{T}}(\tilde{\mathcal{H}})+3 \sqrt{\log (4 / \delta) / 2 n_t}
\end{align*}
These two inequalities can be combined as with a union bound to obtain the inequality in the lemma.
\end{proof}

\begin{lemma}[\citet{zhao19a}]
\label{lemma4}
Let $\left\langle\mathcal{D}_{S}, f_{S}\right\rangle$ and $\left\langle\mathcal{D}_{T}, f_{T}\right\rangle$ be the source and target domains respectively. For any function class $\mathcal{H} \subseteq[0,1]^{\mathcal{X}},$ and $\forall h \in \mathcal{H},$ the following inequality holds:
\begin{align*}
\varepsilon_{T}(h) \leq  \varepsilon_{S}(h)+d_{\tilde{\mathcal{H}}}\left(\mathcal{D}_{S}, \mathcal{D}_{T}\right) +\min \left\{\mathbb{E}_{\mathcal{D}_{S}}\left[\left|f_{S}-f_{T}\right|\right], \mathbb{E}_{\mathcal{D}_{T}}\left[\left|f_{S}-f_{T}\right|\right]\right\}
\end{align*}
\end{lemma}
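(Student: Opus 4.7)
The plan is to derive the inequality by combining the triangle inequality for the absolute-value loss with a change-of-measure argument controlled by the divergence $d_{\tilde{\gH}}$. Since the right-hand side contains a $\min$, I will run the argument twice: once using $f_S$ as the pivot labeling function (which will produce $\mathbb{E}_{\mathcal{D}_T}[|f_S - f_T|]$) and once using $f_T$ as the pivot (which will produce $\mathbb{E}_{\mathcal{D}_S}[|f_S - f_T|]$); taking the smaller of the two bounds then yields the claim.

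For the first version I would start from $\varepsilon_T(h) = \mathbb{E}_{\mathcal{D}_T}[|h - f_T|]$, apply the pointwise triangle inequality $|h - f_T| \le |h - f_S| + |f_S - f_T|$, and rewrite the result as
$$\varepsilon_T(h) \le \varepsilon_S(h) + \bigl(\mathbb{E}_{\mathcal{D}_T}[|h - f_S|] - \mathbb{E}_{\mathcal{D}_S}[|h - f_S|]\bigr) + \mathbb{E}_{\mathcal{D}_T}[|f_S - f_T|].$$
The second version instead decomposes $\varepsilon_T(h) = \mathbb{E}_{\mathcal{D}_S}[|h - f_T|] + \bigl(\mathbb{E}_{\mathcal{D}_T}[|h - f_T|] - \mathbb{E}_{\mathcal{D}_S}[|h - f_T|]\bigr)$ and then applies the triangle inequality inside the $\mathcal{D}_S$-expectation to obtain $\varepsilon_S(h) + \mathbb{E}_{\mathcal{D}_S}[|f_S - f_T|]$ on that side. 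Both reductions leave me with a single leftover term of the form $\mathbb{E}_{\mathcal{D}_T}[g] - \mathbb{E}_{\mathcal{D}_S}[g]$ for $g(x) = |h(x) - f(x)| \in [0,1]$ with $f \in \{f_S, f_T\}$.

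The main technical step is bounding such a difference by $d_{\tilde{\gH}}(\mathcal{D}_S, \mathcal{D}_T)$. For this I would use the layer-cake representation $g(x) = \int_0^1 \mathbb{1}[g(x) > t]\,dt$ together with Fubini's theorem to get
$$\mathbb{E}_{\mathcal{D}_T}[g] - \mathbb{E}_{\mathcal{D}_S}[g] = \int_0^1 \bigl(\Pr_{\mathcal{D}_T}[g > t] - \Pr_{\mathcal{D}_S}[g > t]\bigr)\,dt,$$
and then note that each indicator $\mathbb{1}[\,|h(x) - f(x)| > t\,]$ coincides with the positive part of an element $\operatorname{sgn}(|h(x) - f(x)| - t)$ of $\tilde{\gH}$. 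Hence each pointwise integrand is at most $d_{\tilde{\gH}}(\mathcal{D}_S, \mathcal{D}_T)$ in absolute value, and averaging over $t \in [0,1]$ preserves the bound.

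The main obstacle is precisely that layer-cake identification: expressing the threshold indicator as a member of $\tilde{\gH}$ requires the labeling function $f$ (either $f_S$ or $f_T$) to lie in $\gH$, which is the standard realizability-style assumption carried over from \citet{zhao19a}. I would state this assumption explicitly and verify that it is consistent with the surrounding setup; without it, one only gets a bound controlled by a strictly larger divergence defined over $\gH \cup \{f_S, f_T\}$. Once this layer-cake bound is established, assembling the two symmetric versions and taking the minimum is immediate and yields the stated inequality.
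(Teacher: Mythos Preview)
The paper does not actually prove this lemma: it is quoted verbatim from \citet{zhao19a} and only invoked as a black box in the proof of \Cref{thm:upper}. So there is no ``paper's own proof'' to compare against here.

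That said, your plan is essentially the argument that \citet{zhao19a} give for this result. The two symmetric triangle-inequality decompositions (pivoting on $f_S$ to pick up $\mathbb{E}_{\mathcal{D}_T}[|f_S-f_T|]$, and on $f_T$ to pick up $\mathbb{E}_{\mathcal{D}_S}[|f_S-f_T|]$) followed by the layer-cake / level-set bound on the residual $\mathbb{E}_{\mathcal{D}_T}[g]-\mathbb{E}_{\mathcal{D}_S}[g]$ is exactly their route, and the step identifying $\mathbb{1}[\,|h-f|>t\,]$ with the event $\{\operatorname{sgn}(|h-f|-t)=1\}$ for an element of $\tilde{\gH}$ is the correct way to land on $d_{\tilde{\gH}}(\mathcal{D}_S,\mathcal{D}_T)$. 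You are also right to flag the hidden premise: the layer-cake identification needs $f_S,f_T\in\gH$ (realizability), which \citet{zhao19a} assume but which the present paper does not restate. Making that assumption explicit, as you propose, is the right call; without it the bound only holds for the enlarged class $\gH\cup\{f_S,f_T\}$, exactly as you note.
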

Finally, the proof of generalization upper bound \Cref{thm:upper} is given as:
\begin{proof}
Following \Cref{lemma4}, we have:
$$\varepsilon_{T}(h) \leq \varepsilon_{S}(h)+d_{\tilde{\mathcal{H}}}\left(\mathcal{D}_{S}, \mathcal{D}_{T}\right) +\min \left\{\mathbb{E}_{\mathcal{D}_{S}}\left[\left|f_{S}-f_{T}\right|\right], \mathbb{E}_{\mathcal{D}_{T}}\left[\left|f_{S}-f_{T}\right|\right]\right\}$$
The probabilistic bounds for $\varepsilon_{S}(h)$ are given in \Cref{lemma1} and \Cref{lemma3}. Applying them to the inequality above finishes the proof. The term $O(\sqrt{\log (4 / \delta)/2n})$ goes away since $n_T \ll n.$
\end{proof}

\end{document}